\documentclass[twoside]{article}

\usepackage[preprint]{aistats2026}

\usepackage{natbib}

\usepackage{amsfonts}
\usepackage{amsthm}
\usepackage{amsmath}
\usepackage{mathtools}
\usepackage{cleveref}
\usepackage{xcolor}
\usepackage{multirow}
\usepackage{makecell}
\usepackage{graphicx}
\usepackage{booktabs}
\usepackage{algorithm}
\usepackage{algorithmic}
\usepackage{subfigure, graphicx, transparent}
\usepackage{url}

\usepackage{enumitem}
\setlist{nosep}

\theoremstyle{plain} 
\usepackage{amsthm}
\newtheorem{remark}{Remark}
\usepackage{bbm}
\usepackage{dsfont}  %

\def \ds{\displaystyle}
\def \1{{\mathbbm{1}}}

\def \E{{\mathbb E}}

\def \card{ { \rm Card }}

\DeclareMathOperator{\argmin}{argmin}

\newtheorem{theorem}{Theorem}
\newtheorem{proposition}[theorem]{Proposition}

\theoremstyle{definition}

\theoremstyle{remark}

\begin{document}

\twocolumn[

\aistatstitle{A Variational Estimator for $L_p$ Calibration Errors}

\aistatsauthor{
Eugène Berta$^{*,1,2}$ \And
Sacha Braun$^{*,1,2}$ \And
David Holzmüller$^{*,1}$ \And
Francis Bach$^{1,2}$ \And
Michael I. Jordan$^{1,3}$
}

\aistatsaddress{ 
  $^1$INRIA \quad
  $^2$Ecole Normale Supérieure, PSL Research University \quad
  $^3$UC Berkeley \\
  \vspace{0.2cm} 
  $^*$Indicates equal contribution. Equal authors are listed in alphabetical order. 
}
]

\begin{abstract}
Calibration—the problem of ensuring that predicted probabilities align with observed class frequencies—is a basic desideratum for reliable prediction with machine learning systems.
Calibration error is traditionally assessed via a divergence function, using the expected divergence between predictions and empirical frequencies. Accurately estimating this quantity is challenging, especially in the multiclass setting.
Here, we show how to extend a recent variational framework for estimating calibration errors beyond divergences induced induced by proper losses, to cover a broad class of calibration errors induced by $L_p$ divergences.
Our method can separate over- and under-confidence and, unlike non-variational approaches, avoids overestimation.
We provide extensive experiments and integrate our code in the open-source package \texttt{probmetrics}\footnote{\url{https://github.com/dholzmueller/probmetrics}} for evaluating calibration errors.
\end{abstract}

\section{INTRODUCTION}

We study the problem of calibration in classification.
Let $\mathcal{X}$ be an arbitrary sample space and $\mathcal{Y}$ the space of one-hot-encoded categorical outcomes $\mathcal{Y} = \{ y \in \{ 0, 1 \}^k \mid \sum_{i=1}^k y_i = 1 \}$.
Consider $(X, Y) \in \mathcal{X} \times \mathcal{Y}$ such that $(X, Y) \sim \mathbb{P}$ for some unknown probability distribution $\mathbb{P}$.
The goal of classification is to predict~$Y$ using information from the feature $X$.

In modern machine learning, we usually tackle this class of problems using methods that make continuous predictions in the probability simplex $\Delta_k = \{ p \in [0, 1]^k \mid \sum_{i=1}^k p_i = 1 \}$.
The model $f:\mathcal{X} \rightarrow \Delta_k$ not only predicts the most likely class but produces a probability vector assigning a score in $[0,1]$ to each class, evaluating the likelihood of observing the associated outcome.
Such a classifier is called \emph{calibrated} when
$$
\mathbb{E}[Y | f(X)] = f(X) \quad \text{$\mathbb{P}$-almost surely.}
$$
Since the outcome vector $Y$ is one-hot encoded, the conditional expectation $C \coloneqq \mathbb{E}[Y | f(X)]$ is a probability vector in $\Delta_k$ with, $C_i = \mathbb{P}(Y = e_i | f(X))$.
With this in mind, calibration indicates that the scores produced by the model $f(X)$ align with ``real world'' probabilities.
Calibration is thus a desirable property, making model predictions interpretable for the end user.

Unfortunately, it is often observed that machine learning classifiers are not calibrated ``out of the box'' and tend to produce unreliable predictions \citep{zadrozny2002transforming, guo2017calibration, berta2025structured}.
This lack of calibration can be quantified via calibration error,
\begin{equation} \label{eq:CalibrationError}
    \mathrm{CE}_d(f) = \mathbb{E}[d(f(X), C)] \, ,
\end{equation}
where $d$ can be any divergence function.
In the binary setting $\mathcal{Y} = \{ 0,1 \}$, $f:\mathcal{X} \rightarrow [0,1]$, the $L_1$ calibration error is often used:
\begin{equation} \label{eq:L1CalibrationError}
    \mathrm{CE}_{|\cdot|}(f) = \mathbb{E}[| f(X) - C |] \, .
\end{equation}
Estimating calibration error is challenging as it requires approximating the conditional expectation $C$, with~$f$ continuous.
In the binary case $f(X) \in [0,1]$, this is often done by binning the $[0,1]$ interval and computing the gap between predictions and average outcome for each bin, resulting in an estimator called expected calibration error \citep[ECE,][]{naeini2015obtaining}.
ECE is biased and inconsistent \citep{kumar2019verified, roelofs2022mitigating} and raises the problem of choosing the number of bins.
An additional difficulty is that binning the simplex suffers from the curse of dimensionality when there are more than two classes.
Given these difficulties, it is common to resort to computing the calibration error of the top class only, in a one-versus-rest fashion \citep{guo2017calibration}.
Alternatively, \cite{popordanoska2024consistent} suggest approximating the calibration scores using kernel functions.
\cite{widmann2019calibration} evaluate multiclass calibration error via an alternative variational formulation, approximating test functions using kernels.

\paragraph{Contributions.} In this paper, we extend the calibration error estimator introduced by \citet{berta2025rethinking} to any binary or multiclass $L_p$ calibration error.
These estimators come with two main benefits:
\begin{itemize}[topsep=-6pt]
    \item Using cross-validation to estimate the recalibration function and an evaluation based on proper losses guarantees that we lower bound, in expectation, the true calibration error.
    \item As we demonstrate empirically, our approach converges to the true calibration error faster than classical, binning-based, estimators.
\end{itemize}

\section{ESTIMATING PROPER CALIBRATION ERRORS}

\paragraph{Proper calibration errors.}
A family of calibration errors naturally appear in a well-known decomposition of the risk (expected loss) of the classifier $\mathbb{E}[\ell(f(X), Y)]$, when the loss function~$\ell$ used is a proper loss \citep{gneiting2007strictly}.
Specifically, \cite{brocker2009reliability} showed that for any proper loss $\ell$, the risk decomposes as
\begin{equation} \label{eq:CalibrationRefinement}
    \mathbb{E}[\ell(f(X), Y)] = \mathbb{E}[d_\ell(f(X), C)] + \mathbb{E}[e_\ell(C)] \, ,
\end{equation}
with $d_\ell(p, q) \! = \! \ell(p, q) \! - \! \ell(q,q)$ and $e_\ell(p) \! = \! \ell(p,p)$ the divergence and entropy functions induced by $\ell$.

Looking at the first term in this decomposition, we recognize a calibration error \eqref{eq:CalibrationError}, measured with the divergence function $d_\ell$.
Such calibration errors, induced by a proper loss $\ell$, are called ``proper calibration errors'' \citep{gruber2022better}.

\paragraph{A variational estimator.}
We describe a variational estimator for proper calibration error due to \cite{berta2025rethinking}.
The authors decompose the proper calibration error in \eqref{eq:CalibrationRefinement} as follows:
\begin{equation} \label{eq:VariationalCalibration}
    \mathrm{CE}_{d_\ell}(f) = \mathbb{E}[\ell(f(X), Y)] - \min_{g\in\mathcal{H}} \mathbb{E}[\ell(g \circ f(X), Y)] \, ,
\end{equation}
where the min is taken over the set $\mathcal{H}$ of all measurable functions from $\Delta_k$ to $\Delta_k$.

We denote by $g^\star$ the optimal recalibration function, uniquely defined by $g^\star(f(X)) = \mathbb{E}[Y | f(X)]$. For any strictly proper loss $\ell$, $g^\star$ satisfies
\begin{equation} \label{eq:g_star}
    g^\star \in \argmin_{g \in \mathcal{H}} \mathbb{E}[ \ell( g \circ f(X), Y ) ] \; .
\end{equation}
Given an estimate $\hat{g}$ of $g^\star$, \eqref{eq:VariationalCalibration} reveals that a proper calibration error can be estimated by evaluating the risk of the initial model $f$ minus the risk of $\hat{g} \circ f$,
\begin{equation} \label{eq:CE_hat}
    \widehat{\mathrm{CE}}_{d_\ell}(f) = \frac{1}{n} \sum_{i=1}^n \ell(f(X_i), Y_i) - \ell( \hat{g} \circ f(X_i), Y_i) \, .
\end{equation}
This provides a convenient way to estimate calibration error \eqref{eq:CalibrationError}, in both binary and multiclass settings.

\eqref{eq:g_star} shows that $g^\star$ can be estimated via empirical risk minimization of $\ell(g \circ f(X), Y)$ for any strictly proper loss $\ell$.
More generally, since $g^\star$ is the conditional expectation of the categorical variable $Y$ given $f(X)$, an estimator~$\hat{g}$ can be obtained with a classification algorithm targeting $Y$ using $f(X)$ as a feature vector.

\paragraph{Obtaining a lower bound with cross-validation.}
This procedure requires estimating both the recalibration function $\hat{g}$ and the calibration error induced \eqref{eq:CE_hat}.
We thus have two estimation tasks for a single set of samples.
Re-using the data for both tasks exposes us to the classical pitfall of overfitting the recalibration function, meaning that the empirical risk of $\hat{g} \circ f$ on the data used to fit $\hat{g}$ might be far lower than the population risk of $g^\star \circ f$.
In this case, the estimator \eqref{eq:CE_hat} would over-estimate the true calibration error.

To circumvent this issue, \cite{berta2025rethinking} suggest learning~$\hat{g}$ within a class of functions that is small enough to prevent overfitting, and/or using cross-validation.
The benefit of using cross-validation is that learning~$\hat{g}$ and estimating the calibration error with different samples guarantees that the calibration error is not over-estimated (in expectation).
Indeed, for every function~$\hat{g}_i$ learned on cross-validation split $i$,
$$
\mathbb{E}[\ell(\hat{g}_i \circ f(X), Y)] \ge \mathbb{E}[\ell(g^\star \circ f(X), Y)] \; ,
$$
because $g^\star$ minimizes $\mathbb{E}[\ell(g \circ f(X), Y)]$.
We then evaluate calibration error by estimating $\mathbb{E}[\ell(\hat{g} \circ f(X), Y)]$ on the hold-out split, and average the results.
Up to the variance of the empirical expectation, this yields a lower bound on the true calibration error.

The better $\hat{g}$ approximates $g^\star$, the closer we get to the true calibration error.
This motivates the use of well designed machine learning models to learn~$\hat{g}$, which we investigate empirically in \Cref{sec:experiments}.

One apparent limitation of this procedure is that we are restricted to proper calibration errors like the squared Euclidean calibration error induced by the Brier score or the KL calibration error induced by the logloss.
Any Bregman divergence can be recovered, but not distances induced by $L_p$ norms, such as the usual $L_1$ binary calibration error \eqref{eq:L1CalibrationError}.
In the multiclass case, the $L_1$ calibration error is simply the sum of the binary $L_1$ calibration error of each class against the rest, sometimes called ``pairwise calibration error'' $\mathbb{E}[\| f(X) - C \|_1] = \sum_{i=1}^k \mathbb{E}[ | f(X)_i - C_i |]$.
Another interesting multiclass extension is the $L_2$, or Euclidean calibration error $\mathbb{E}[\| f(X) - C \|_2]$, that is also not proper, and that cannot be estimated by binning the $[0,1]$ interval.
In the next section, we show how to use the estimator from \cite{berta2025rethinking} to estimate any $L_p$ calibration error, extending an idea introduced by \cite{braun2025conditional}.

\section{ESTIMATING $L_p$ CALIBRATION ERRORS}

On the probability simplex, a proper loss is fully characterized by a concave function $H:\Delta_k \rightarrow \mathbb{R}$, via $\ell(u,v) = H(u) + \langle \delta H(u), v-u \rangle$, where $\delta H(u)$ denotes a super-gradient of $H$ in~$u$.
The associated entropy function is $e_\ell(u) = H(u)$ and the divergence is $d_\ell(u,v) = H(u) - H(v) + \langle \delta H(u), v - u \rangle$.
We refer the interested reader to \citet{gneiting2007strictly}.
Proper calibration errors $\mathrm{CE}_{d_\ell}$ can be characterized this way but not $L_p$ calibration errors.

\cite{braun2025conditional} show that by allowing the entropy function $H$ (and thus the proper loss) to change for every $f(X)$, one can recover, in expectation, divergences that are not induced by a fixed proper loss.
We show how to use this to evaluate a family of non-proper calibration errors: $\mathrm{CE}_{\|\cdot\|_p}(f)$, for any $p \ge 1$.

\begin{proposition} \label{prop:main}
For $p \geq 1, z \in \Delta_k, Y \in \mathcal{Y}$, define
\begin{equation*}
    \ell_{f(X)}(z, Y) \coloneqq \mathds{1}_{z \neq f(X)} \langle \nabla_z \| z - f(X) \|_p , f(X) - Y \rangle \, ,
\end{equation*}
where $\nabla_z \| z - f(X) \|_p = \mathrm{sign}(z-f(X)) \odot \frac{| z - f(X) |^{p-1}}{\| z - f(X) \|_p^{p-1}}$, with element-wise power and $\mathrm{sign}$ in the numerator (for $p=1$, $\nabla_z \| z - f(X) \|_p =\mathrm{sign}(z-f(X))$).
Then,
$$
\mathrm{CE}_{\|\cdot\|_p}(f) = \mathbb{E}[\ell_{f(X)}(f(X), Y) - \ell_{f(X)}(g^\star\circ f(X), Y)] \, .
$$
\end{proposition}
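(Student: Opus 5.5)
The first term vanishes: when $z=f(X)$ the indicator $\mathds{1}_{z\neq f(X)}$ is zero, so $\ell_{f(X)}(f(X),Y)=0$. It therefore suffices to show
$$
-\mathbb{E}[\ell_{f(X)}(g^\star\circ f(X), Y)] = \mathbb{E}[\|f(X)-C\|_p].
$$
I would condition on $f(X)$ and use the tower property. Write $z=g^\star(f(X))=C$. Both $\mathds{1}_{C\neq f(X)}$ and $\nabla_z\|C-f(X)\|_p$ are $\sigma(f(X))$-measurable. Since $\mathbb{E}[Y\mid f(X)]=C$,
$$
\mathbb{E}[\ell_{f(X)}(C,Y)\mid f(X)] = \mathds{1}_{C\neq f(X)}\,\langle \nabla_z\|C-f(X)\|_p,\, f(X)-C\rangle .
$$
One point of care: $\nabla_z\|C-f(X)\|_p$ is a measurable function of $f(X)$, because $C=g^\star(f(X))$ with $g^\star$ measurable. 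Its entries lie in $[-1,1]$, since $|u_i|^{p-1}\le\|u\|_p^{p-1}$. So the integrand is bounded and conditioning is legitimate.

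The remaining step is an Euler-type identity for the $p$-norm: for $u\neq 0$, $\langle\nabla\|u\|_p,u\rangle=\|u\|_p$. This holds because $\|\cdot\|_p$ is positively $1$-homogeneous. It can also be checked directly:
$$
\sum_i \mathrm{sign}(u_i)\,|u_i|^{p-1}u_i/\|u\|_p^{p-1} = \|u\|_p^p/\|u\|_p^{p-1} = \|u\|_p .
$$
For $p=1$ the sum is $\sum_i|u_i|$. Apply this with $u=C-f(X)$, noting that $f(X)-C=-u$. Then the conditional expectation equals $-\mathds{1}_{C\neq f(X)}\|C-f(X)\|_p = -\|C-f(X)\|_p$. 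The indicator can be dropped because the norm is zero on the complementary event.

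Taking expectations gives
$$
\mathbb{E}[0-\ell_{f(X)}(C,Y)] = \mathbb{E}[\|f(X)-C\|_p] = \mathrm{CE}_{\|\cdot\|_p}(f).
$$
There is no real obstacle. The only delicate points are the two already handled: pulling the $f(X)$-measurable gradient out of the conditional expectation, and the $0/0$ case at $u=0$, which the indicator removes. For $p=1$ the homogeneity identity holds with the convention $\mathrm{sign}(0)=0$, since zero coordinates contribute nothing.
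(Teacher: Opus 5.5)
Your proof is correct and is essentially the paper's argument. The paper phrases it through the proper loss induced by the entropy $H_{f(X)}(z)=-\|z-f(X)\|_p$: it uses $\mathbb{E}[\ell_{f(X)}(C,Y)\mid f(X)]=\ell_{f(X)}(C,C)=H_{f(X)}(C)$, and then the same Euler identity $\langle \nabla_z\|z-f(X)\|_p, z-f(X)\rangle=\|z-f(X)\|_p$ to reduce that loss to the stated form. You apply the tower property and the Euler identity directly to the simplified loss, which is the same computation in a different order, and your boundedness and measurability check is a small addition the paper leaves implicit.
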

\begin{proof}
Let $H_{f(X)}(z) = -\| z - f(X) \|_p$, the associated proper loss is
$$
\ell_{f(X)}(z, Y) = \langle \delta H_{f(X)}(z), Y - z \rangle + H_{f(X)}(z) \; .
$$
There are multiple super-gradients of $H_{f(X)}(\cdot)$ in $f(X)$ but we set $\delta H_{f(X)}(f(X)) = 0$ so $\ell_{f(X)}(f(X), Y) = 0$.

Evaluating the variational calibration error \eqref{eq:VariationalCalibration} yields
\begin{align*}
\mathbb{E}[&\ell_{f(X)}(f(X), Y)] - \mathbb{E}[\ell_{f(X)}(g^\star\circ f(X), Y)]\\
    &= - \mathbb{E}[\ell_{f(X)}(g^\star\circ f(X), Y)]\\
    &= - \mathbb{E}_{f(X)}[ \mathbb{E}_{Y | f(X)}[ \ell_{f(X)}(g^\star\circ f(X), Y) ] ]\\
    &= - \mathbb{E}_{f(X)}[ \ell_{f(X)}(g^\star\circ f(X), C) ]\\
    &= - \mathbb{E}[ \ell_{f(X)}(C, C) ]\\
    &= - \mathbb{E}[ H_{f(X)}(C) ]\\
    &= \mathbb{E}[ \| C - f(X) \|_p ] \; ,
\end{align*} we recover the $L_p$ calibration error.
For a given $p \ge 1$, the proper loss induced is
\begin{equation*}
    \ell_{f(X)}(z, Y) = \langle \nabla_z \| z - f(X) \|_p , z-Y \rangle - \| z - f(X) \|_p
\end{equation*}
for every $z \neq f(X)$. The expression for $\nabla_z \|z - f(X)\|_p$ follows from a simple computation.
Noticing that
$$
\| z - f(X) \|_p = \langle \nabla_z\| z - f(X) \|_p , z - f(X) \rangle \, ,
$$
the loss simplifies to
\begin{equation*}
    \ell_{f(X)}(z, Y) = \langle \nabla_z \| z - f(X) \|_p , f(X) - Y \rangle \, ,
\end{equation*}
which concludes the proof.
\end{proof}

In practice, to estimate the $L_p$ calibration error of a classifier $f$, we fit $(\hat{g}_j)_{1 \le j \le k}$ with $k$-fold cross-validation. We then compute the calibration error on each holdout fold $\mathcal{I}_j$ with
$$
\widehat{\mathrm{CE}}_{\| \cdot \|_p}(f)_j = - \frac{1}{|\mathcal{I}_j|} \sum_{i \in \mathcal{I}_j} \ell_{f(X_i)}(\hat{g} \circ f(X_i), Y_i) \; ,
$$
and average the $k$ estimates obtained.
We provide details on the procedure in Appendix~\ref{app:algorithm}.

\begin{remark}
    This procedure applies not only to $L_p$ norms but to any convex distance function by setting $d(f(X), z) = H_{f(X)}(z)$ such that $H_{f(X)}$ is minimized at 0 in $f(X)$, see Appendix~\ref{app:general:distances}.
\end{remark}

\begin{remark}
    In Appendix~\ref{app:over:confidence}, we demonstrate how to use this procedure to evaluate over- and under-confidence separately for a more refined analysis of classifier's predictions.
    In the multiclass setting we use top class over- / under-confidence.
\end{remark}

\begin{remark}
    Another well known variational formulation co-exists for calibration error, see for example \cite{kakade2008deterministic}.
    For $p=2$, $\mathrm{CE}_{\|\cdot\|_2} = \sup_{h: \forall z \|h(z)\|_2\leq 1} \E[ \langle h(f(X)), Y-f(X)\rangle]$, it coincides with our formulation via $h(f(X)) = \nabla_z \|z - f(X)\| \vert_{z = g(f(X))}$, but learning $g$ instead of $h$ allows the use of existing classifiers.
\end{remark}

\section{EXPERIMENTS}
\label{sec:experiments}

We update the MetricsWithCalibration class in the probmetrics package that was realised by \cite{berta2025rethinking} to allow computing $L_p$ calibration errors variationally, for any $p$, over- and under- confidence and top class errors.
\paragraph{Obtaining a lower bound with cross-validation.}
We illustrate the benefit of using cross-validation with a simple experiment on binary synthetic datasets\footnote{Our package is accessible at \url{https://github.com/dholzmueller/probmetrics} and the code for the experiments at \url{https://github.com/ElSacho/Evaluating_Lp_Calibration_Errors}} (multiclass results are in Appendix~\ref{app:synthetic:multi}).
We compare applying our procedure to estimate $\mathrm{CE}_{|\cdot|}$, using isotonic regression to learn $\hat{g}$, with and without cross-validation.
As shown in \Cref{fig:bin:synthetic}, using cross validation returns a lower bound on the true $\mathrm{CE}_{|\cdot|}$.
In contrast, without cross validation, isotonic regression over-fits and the calibration error estimate is pessimistic, particularly with few samples and when the underlying predictor is already well-calibrated.
For completeness, we also report the calibration error estimate obtained with the standard ECE estimator described in the introduction, with 15 equal sized bins. It tends to over-estimates calibration error as well.
We provide missing details on these experiments, along with additional results in Appendix~\ref{app:synthetic}.
While we use a different proper loss here, \cite{dimitriadis2021stable} suggest estimating calibration error with isotonic regression without cross validation.

\begin{figure}[tb!]
    \center
    \includegraphics[width=0.9\columnwidth]{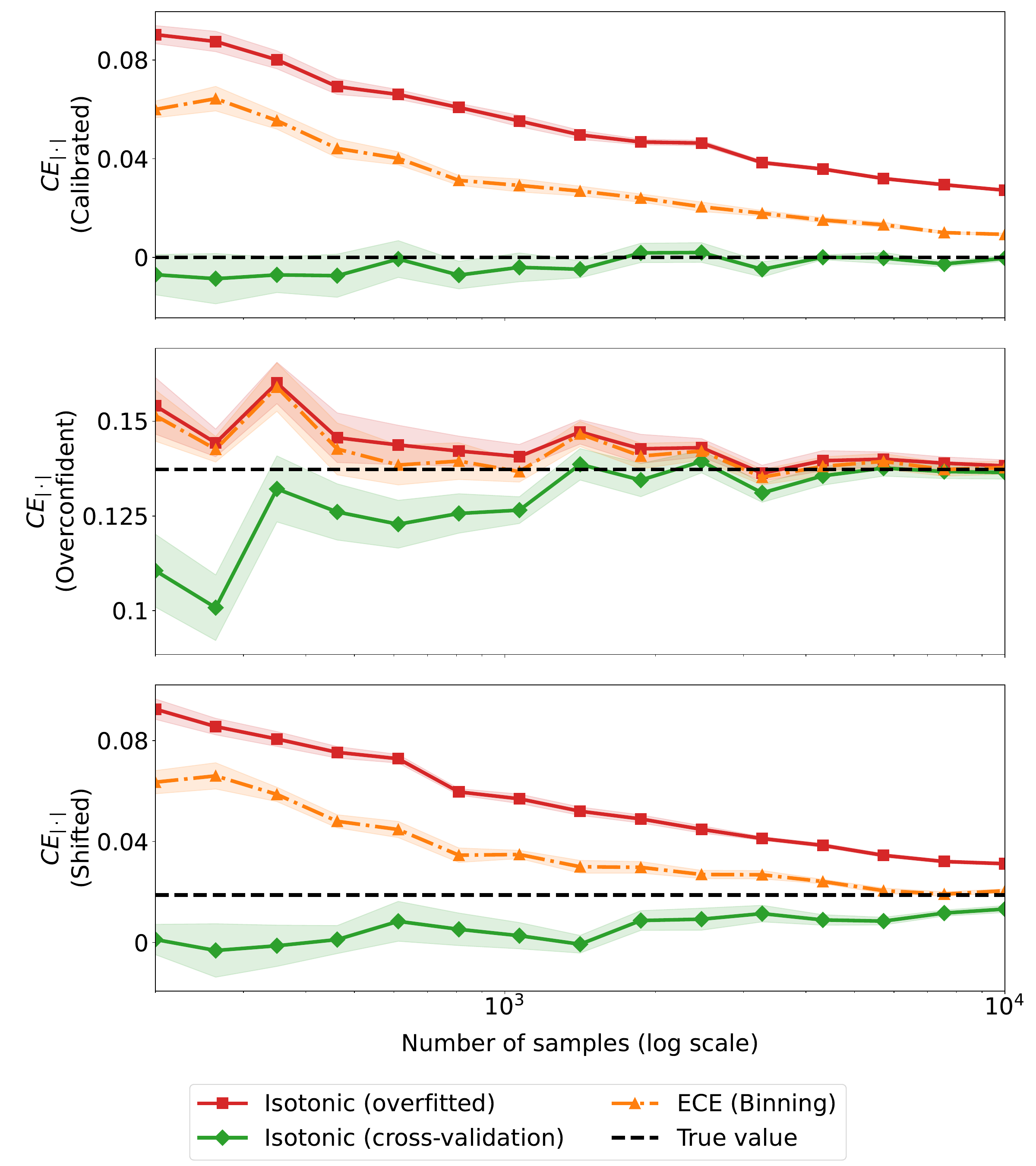}
    \vspace{-3mm}
    \caption{Estimated $\mathrm{CE}_{|\cdot|}$ by number of samples when the predictions are calibrated (top), over-confident (middle), or shifted by a small parameter (bottom).}
    \label{fig:bin:synthetic}
    \vspace{-3mm}
\end{figure}

\paragraph{Approaching the true calibration error with better classifiers.}
While our procedure always yields a lower bound in expectation, how close our estimate is from the true calibration error is fundamentally linked to how well the underlying classifier learns to predict $Y$ given $f(X)$.
For this reason, one could be tempted to use highly tuned classifiers.
However, since our end goal is to compute a metric, which should remain as fast as possible, we limit ourselves to classifiers that can be fitted in a few seconds.
Following recent benchmarks \citep{holzmuller2024better, erickson2025tabarena}, we select a suite of efficient models (we provide architectural details in Appendix~\ref{app:strong:classifiers}).

We evaluate the effectiveness of these classifiers in approximating different binary and multiclass calibration errors.
To do so, we use the out-of-fold validation predictions stored in TabRepo \citep{tabrepo} for 58 binary datasets with predictions from 6 different models and 25 multiclass datasets with predictions from 5 different models.
For each experiment (dataset-model pair), we estimate calibration errors using 5-fold cross-validation.
Given that our framework guarantees a lower bound on the true $\mathrm{CE}_{d}$, higher estimated values indicate better estimation, giving a simple way to compare the different classifiers considered.
Table~\ref{table:percentage:improvement} summarizes the average percentage of the largest calibration error estimate recovered by each classifier (larger is better), alongside the average time to compute each metric per 1,000 samples.
We average values over ten independent runs and over all dataset-model pairs.
To filter out well calibrated models, we remove experiments for which the largest estimated $\mathrm{CE}_{|\cdot|}$ (binary) or $\mathrm{CE}_{\|\cdot\|_2}$ (multiclass) is below $0.02$.
\begin{table}[tb!]
\centering
\resizebox{\linewidth}{!}{%
\begin{tabular}{l cc cc c}
\toprule
& \multicolumn{4}{c}{Avg. \% of max CE} & \\
\cmidrule(lr){2-5} 
& \multicolumn{2}{c}{Binary} & \multicolumn{2}{c}{Multiclass} & \\
\cmidrule(lr){2-3} \cmidrule(lr){4-5}
Classifier & $\mathrm{CE}_{|\cdot|}$ & $\mathrm{CE}_{(\cdot)^2}$ & $\mathrm{CE}_{\|\cdot\|_2}$ & $\mathrm{CE}_{\|\cdot\|_2^2}$ & \makecell{Avg. time per\\1K samples [s]} \\
\midrule
TabICLv2 & $71.6_{1.7}$ & $57.5_{2.3}$ & $\textbf{72.7}_{3.3}$ & $\textbf{66.7}_{3.8}$ & $3.7_{0.1}$ (GPU)\\
RealTabPFN-2.5 & $70.5_{1.7}$ & $52.8_{2.4}$ & $72.5_{3.1}$ & $59.5_{3.9}$ & $4.7_{0.1}$ (GPU) \\
WS CatBoost (+SMS)& $\textbf{72.9}_{1.7}$ & $\textbf{59.4}_{2.2}$ & $61.9_{3.3}$ & $60.5_{3.9}$ & $4.2_{0.1}$ \\
WS LGBM (+SMS) & $70.3_{1.8}$ & $58.1_{2.4}$ & $59.5_{3.3}$ & $54.2_{3.8}$ & $3.3_{0.1}$ \\
CatBoost (+SMS)& $70.3_{1.6}$ & $36.1_{2.2}$ & $71.7_{2.7}$ & $40.7_{3.9}$ & $8.9_{0.3}$ \\
LightGBM (+SMS)& $65.7_{1.6}$ & $29.2_{2.1}$ & $68.3_{2.7}$ & $33.8_{3.7}$ & $4.3_{0.1}$ \\
Nadaraya-Watson & $52.4_{2.1}$ & $36.6_{2.4}$ & $67.9_{2.9}$ & $39.8_{4.1}$ & $\textbf{0.0}_{0.0}$ \\
TS & $60.5_{2.2}$ & $57.4_{2.6}$ & $37.5_{3.5}$ & $32.6_{4.2}$ & $0.1_{0.0}$ \\
Isotonic & $66.2_{1.7}$ & $42.2_{2.4}$ & $42.7_{2.9}$ & $21.1_{2.9}$ & $\textbf{0.0}_{0.0}$ \\
PartitionWise & $62.0_{1.6}$ & $23.4_{2.0}$ & $54.3_{2.7}$ & $10.9_{2.4}$ & $0.1_{0.0}$ \\
\bottomrule
\end{tabular}
}
\caption{\textbf{CE recovered by different methods}, relative to the highest value among all methods for binary (left) and multiclass (right) experiments. CEs are averaged over 10 runs. 
The lowercase number is the standard error across all datasets.
Methods are ranked based on the highest average over the four columns, best values per columns are in bold.}
\label{table:percentage:improvement}
\vspace{-3mm}
\end{table}

Overall, the state-of-the-art classifiers TabICLv2 and RealTabPFN-2.5 recover the most calibration error.
However, these models run on the GPU, which limits their usability as default options.
We also experiment with widely used gradient boosting methods, namely CatBoost and LightGBM, comparing two training strategies: (i) standard training, which fits a new classifier from scratch using the raw predictions~$f(X)$, and (ii) a strategy in which the model is warm-started using initial un-calibrated logits (denoted WS in the table).
We fit these models using inner cross-validation for early stopping and post-hoc calibration with structured matrix scaling (multiclass) and quadratic scaling (binary) \citep{berta2025structured}.
Both strategies yield comparable results for non-proper calibration errors.
Proper calibration errors however, require approximating more closely the true recalibration function, and logit initialization consistently improves performance.

Finally, we also evaluate commonly used estimators, including Nadaraya-Watson, temperature scaling, isotonic regression (one-versus-rest) and partition-wise estimators.
While these methods provide faster estimates of the calibration error, this computational advantage comes at the cost of reduced accuracy, particularly for proper classification metrics.

In light of these findings, we recommend the logit-initialized CatBoost classifier as the default model in our package.

\bibliography{references}

\clearpage
\appendix
\thispagestyle{empty}

\onecolumn
\aistatstitle{Supplementary Materials}

\section{ALGORITHMIC PROCEDURE}
\label{app:algorithm}
We summarize the proposed algorithm as well as the cross-validation strategy in \Cref{alg:CE}.
\begin{algorithm}[h!]
\caption{Computing $\widehat{\mathrm{CE}_{d_\ell}}$.}
\label{alg:CE}
\begin{algorithmic}[1]
\STATE {\bfseries Input:} Data $\{(f(X_i), Y_i)\}_{i=1}^n$, number of folds $k \ge 2$, proper loss $\ell$, classification method.
\STATE \textbf{Partition the data:} Randomly divide $\{1, \dots, n\}$ into $k$ approx.\ equal-sized folds 
$\{\mathcal{I}_1, \dots, \mathcal{I}_k\}$.

\FOR{$j = 1$ to $k$}
    \STATE \textbf{Define folds:} $\ds \mathcal{I}_{\text{val}}^{(j)} = \mathcal{I}_j, 
    \quad 
    \mathcal{I}_{\text{tr}}^{(j)} = \{1, \dots, n\} \setminus \mathcal{I}_j.$
    
    \STATE \textbf{Train classifier:} Fit a classifier $g^{(j)}$ on $\{(f(X_i), Y_i) \mid i \in \mathcal{I}_{\text{tr}}^{(j)}\}$ using the specified method.
    \STATE \textbf{Evaluate on validation fold:} Using $i \in \mathcal{I}_{\text{val}}^{(j)}$, compute
    \vspace{-3mm}
    \[
    \widehat{\mathrm{CE}_{d_\ell}}^{(j)} = \cfrac{1}{|\mathcal{I}_{\text{val}}^{(j)}|}\sum_{i\in \mathcal{I}_{\text{val}}^{(j)}}\ell(f(X_i), Y_i) - \ell(g^{(j)}(f(X_i)), Y_i).
    \]
    \vspace{-5mm}
\ENDFOR

\STATE \textbf{Aggregate across folds:} $\ds \widehat{\mathrm{CE}_{d_\ell}} = \sum_{j=1}^k \cfrac{|\mathcal{I}_j|}{n}\widehat{\mathrm{CE}_{d_\ell}}^{(j)}$.
\end{algorithmic}
\end{algorithm}

\section{ESTIMATING GENERAL DISTANCES}
\label{app:general:distances}

The following proposition generalizes \Cref{prop:main}, showing that general distances $d(p, q)$ on the simplex can be written in a variational form whenever all functions $D_p(q) = d(p, q)$ are convex and minimized at $p$.

\begin{proposition}[Multiclass extension of Proposition 3.1 in \cite{braun2025conditional}]
Let $\Delta_k$ be the probability simplex in $\mathbb{R}^k$. Let $f(X) \in \Delta_k$ be a baseline prediction. Let $D_{f(X)}: \Delta_K \to \mathbb{R}_{\geq 0}$ be a convex function minimized at $f(X)$, such that $D_{f(X)}(f(X)) = 0$. 

Let $\nabla D_{f(X)}$ be a subgradient of $D_{f(X)}$ satisfying $\nabla D_{f(X)}(f(X)) = 0$. Then, for $p \in \Delta_K$ and a one-hot encoded vector $Y \in \{e_1, \dots, e_k\}$, the function
\begin{equation*}
    \ell_{f(X)}(p, Y) := -D_{f(X)}(p) - \langle Y - p, \nabla D_{f(X)}(p) \rangle \, ,
\end{equation*}
is a proper loss satisfying
\begin{equation*}
    \mathbb{E}_{X}[D_{f(X)}(C)] = \E[\ell_{f(X)}(f(X), Y)]- \inf_{g\in\mathcal{H}} \E[\ell_{f(X)}(g\circ f(X), Y)] \, ,
\end{equation*} with $\mathcal{H}$ the set of all measurable functions from $\Delta_k$ to $\Delta_k$ and $C = \E[Y|f(X)]$.
\end{proposition}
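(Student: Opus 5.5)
The plan follows the proof of \Cref{prop:main}, with $-\|\cdot - f(X)\|_p$ replaced by the concave entropy $H_{f(X)} := -D_{f(X)}$. Conditionally on $f(X)$, the function $H_{f(X)}$ is concave on $\Delta_k$. Its supergradient at $p$ is $-\nabla D_{f(X)}(p)$, and the Savage representation then reads
\[
\ell_{f(X)}(p,Y) = H_{f(X)}(p) + \langle \delta H_{f(X)}(p), Y-p\rangle ,
\]
which is exactly the loss in the statement.

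\textbf{Step 1: properness.} Fix $f(X)$ and any $q \in \Delta_k$, and let $Y$ be distributed with mean $q$. By linearity in $Y$, $\E[\ell_{f(X)}(p,Y)] = -D_{f(X)}(p) - \langle q-p, \nabla D_{f(X)}(p)\rangle$. Convexity of $D_{f(X)}$ gives the subgradient inequality $D_{f(X)}(q) \ge D_{f(X)}(p) + \langle \nabla D_{f(X)}(p), q-p\rangle$. Hence $\E[\ell_{f(X)}(p,Y)] \ge -D_{f(X)}(q) = \ell_{f(X)}(q,q)$, so $p=q$ minimizes the expected loss and the loss is proper. I would note the identity $\ell_{f(X)}(q,q) = -D_{f(X)}(q)$ explicitly, since it is reused below.

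\textbf{Step 2: the two risks.} At the baseline, $D_{f(X)}(f(X))=0$ and $\nabla D_{f(X)}(f(X))=0$, so $\ell_{f(X)}(f(X),Y)=0$ identically. For the infimum, take any measurable $g$ and condition on $f(X)$. Because $g\circ f(X)$ is $f(X)$-measurable and the loss is affine in $Y$,
\[
\E[\ell_{f(X)}(g\circ f(X),Y)\mid f(X)] = \E[\ell_{f(X)}(g\circ f(X),C)\mid f(X)] .
\]
By Step 1 with $q=C$, this is at least $-D_{f(X)}(C)$. Taking expectations gives $\inf_g \E[\ell_{f(X)}(g\circ f(X), Y)] \ge -\E[D_{f(X)}(C)]$. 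The choice $g=g^\star$ with $g^\star\circ f(X)=C$ attains the bound, since $g^\star$ is measurable because $C$ is $\sigma(f(X))$-measurable (Doob--Dynkin). The infimum therefore equals $-\E[D_{f(X)}(C)]$. Subtracting it from the zero baseline risk yields the claim.

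\textbf{Main obstacle.} The main difficulty is technical rather than conceptual. The loss depends on $f(X)$ itself, and one must check that $(z,x)\mapsto \ell_{f(x)}(z,y)$ is jointly measurable, so that both the conditioning step and $\E[D_{f(X)}(C)]$ make sense. I would handle this by assuming a measurable selection of subgradients, which is automatic for the $L_p$ case, where the formula for the subgradient is explicit. I would also need integrability, for instance boundedness of $D$ and $\nabla D$ on the compact simplex, so that the inequalities can be integrated. A smaller point is the boundary of $\Delta_k$, where a subgradient of $D_{f(X)}$ restricted to the simplex exists only up to directions normal to the simplex. This causes no trouble, because the subgradient is only ever paired with differences $q-p$ of points in $\Delta_k$, which lie in the tangent space $\{v : \sum_i v_i = 0\}$.
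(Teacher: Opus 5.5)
Your proposal is correct and follows the paper's proof: properness from the subgradient inequality, $\ell_{f(X)}(f(X),Y)=0$ from the two normalizations, and $-\ell_{f(X)}(C,C)=D_{f(X)}(C)$. The only difference is that you prove $\inf_{g} \E[\ell_{f(X)}(g\circ f(X),Y)] = -\E[D_{f(X)}(C)]$ directly by conditioning on $f(X)$, where the paper cites Banerjee et al.\ and Lemma A.6 of Berta et al.; your measurability and integrability remarks make explicit assumptions the paper leaves implicit.
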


\begin{proof}
    First, $\ell_{f(X)}$ is a proper score because for all $p, q \in \Delta_K$, the definition of the expected score and the subgradient inequality for the convex function $D_{f(X)}$ yields:
    \begin{align*}
        \mathbb{E}_{Y \sim q}[\ell_{f(X)}(p, Y)] &= -D_{f(X)}(p) - \langle \mathbb{E}_{Y \sim q}[Y] - p, \nabla D_{f(X)}(p) \rangle \\
        &= -D_{f(X)}(p) - \langle q - p, \nabla D_{f(X)}(p) \rangle \\
        &\geq -D_{f(X)}(q) \\
        & = \mathbb{E}_{Y \sim q}[\ell_{f(X)}(q, Y)]~.
    \end{align*}
    Since we assumed $\nabla D_{f(X)}(f(X)) = 0$ and $D_{f(X)}(f(X)) = 0$, the score for predicting the baseline exactly is:
    \begin{equation*}
        \ell_{f(X)}(f(X), Y) = -0 - \langle Y - f(X), 0 \rangle = 0~.
    \end{equation*}
    Hence, the expected excess risk of predicting $p$ instead of the baseline $f(X)$, when the true distribution is $p$, is:
    \begin{align*}
        \mathbb{E}_{Y \sim C}[\ell_{f(X)}(f(X), Y) - \ell_{f(X)}(C, Y)] &= \mathbb{E}_{Y \sim C}[-\ell_{f(X)}(C, Y)] \\
        &= D_{f(X)}(C) + \langle \mathbb{E}_{Y \sim C}[Y] - C, \nabla D_{f(X)}(C) \rangle \\
        &= D_{f(X)}(C) + \langle C - C, \nabla D_{f(X)}(p) \rangle \\
        &= D_{f(X)}(C)~.
    \end{align*}
    Taking the expectation over $X$ and using that
    $$
    \E[\ell_{f(X)}(C, Y)] = \inf_{g\in\mathcal{H}} \E[\ell_{f(X)}(g\circ f(X), Y)] \; ,
    $$
    (see for example \cite{banerjee2005optimality}, or in the same setting of multiclass calibration Lemma A.6 in \cite{berta2025rethinking}), we obtain the desired result.
\end{proof}

\section{ESTIMATING OVER- AND UNDER-CONFIDENCE}
\label{app:over:confidence}
\paragraph{Binary case.}
In the binary case, it is also possible to isolate over- and under-confidence. To do so, we adjust the proper loss to clip to $f(X)$ the values of the rectified prediction when they are over- or under-confident, following \cite{braun2025conditional}. Specifically, given a proper loss $\ell$, we define:
\begin{align*}
    \ell_{f(X), +}(p, y) & := \ell(\1_{f(X)>1/2} \min\{p, f(X)\}+ \1_{f(X)<1/2}\max\{p, f(X)\} + \1_{f(X)=1/2}/2, y) \\
    \ell_{f(X), -}(p, y) & := \ell(\1_{f(X)>1/2} \max\{p, f(X)\}+ \1_{f(X)<1/2}\min\{p, f(X)\}+ \1_{f(X)=1/2}/2, y)
\end{align*}
that can respectively be used to estimates the over- and under-confidence of the classifier $f(\cdot)$.

\paragraph{Multiclass case.} In the multiclass case, it is not clear how to define over- and under-confidence.
One option is to adopt a one-versus-rest fashion and to evaluate the over- or under-confidence of the top class prediction using the losses that we just introduced.

\paragraph{Experiments.} We illustrate the use of theses losses with simple synthetic experiments, with respectively over-confident predictions, under-confident predictions, or a mix of both.
The mis-calibration functions we use are displayed in \Cref{fig:app:prediction:over:under}, and the calibration errors obtained in \Cref{table:over:under}.
Using theses losses provides a refined analysis, detecting no under-confidence in the over-confident scenario, and revealing that the calibration error is fully determined by the over-confident component, that matches the true $\mathrm{CE}_{|\cdot|}$.
Similar observations can be made in the under-confident experiment, and the effect of over- and under-confidence are well separated when both are present.

\begin{figure}[h!]
    \center
    \subfigure{\includegraphics[width=0.3\columnwidth]{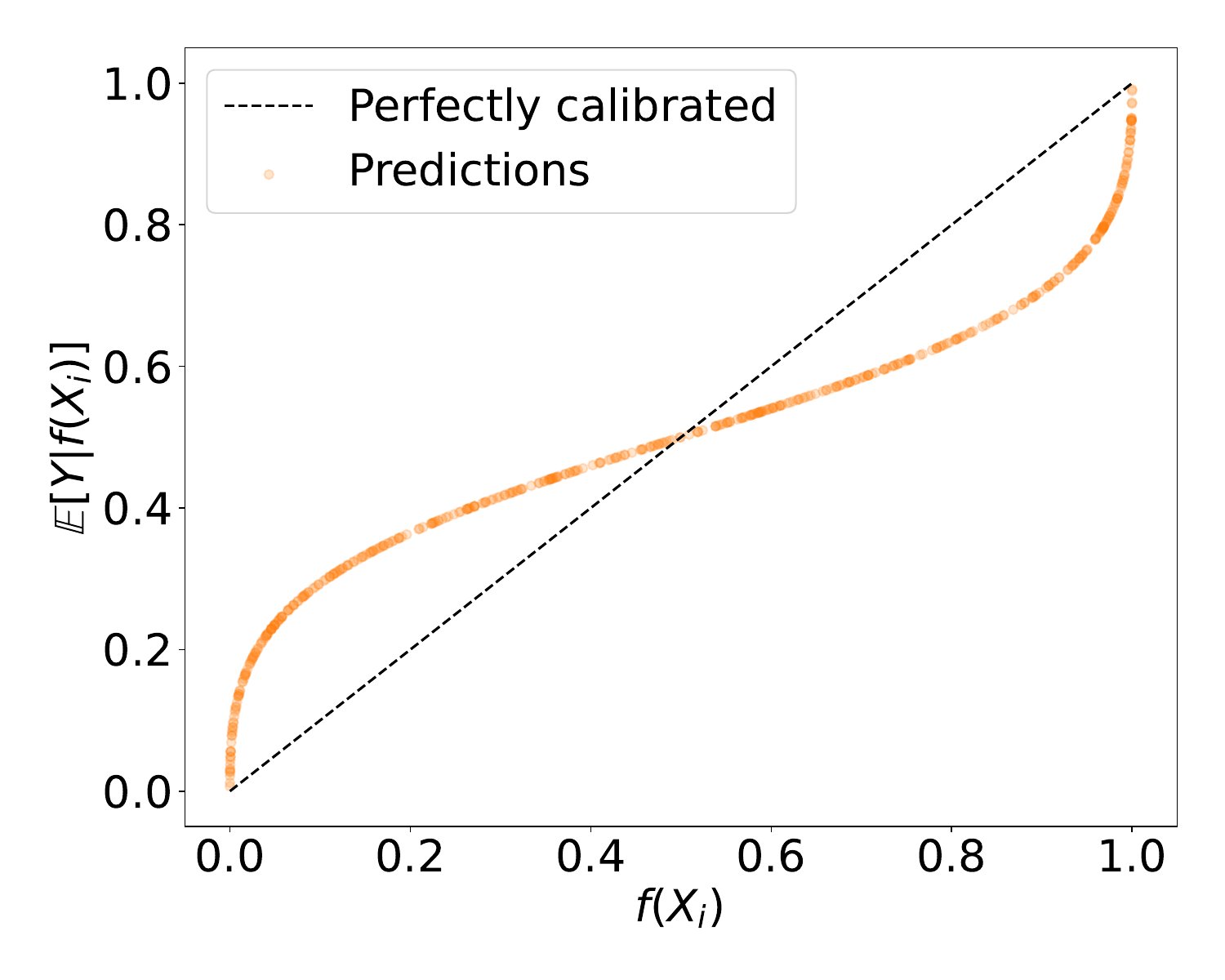}} ~
    \subfigure{\includegraphics[width=0.3\columnwidth]{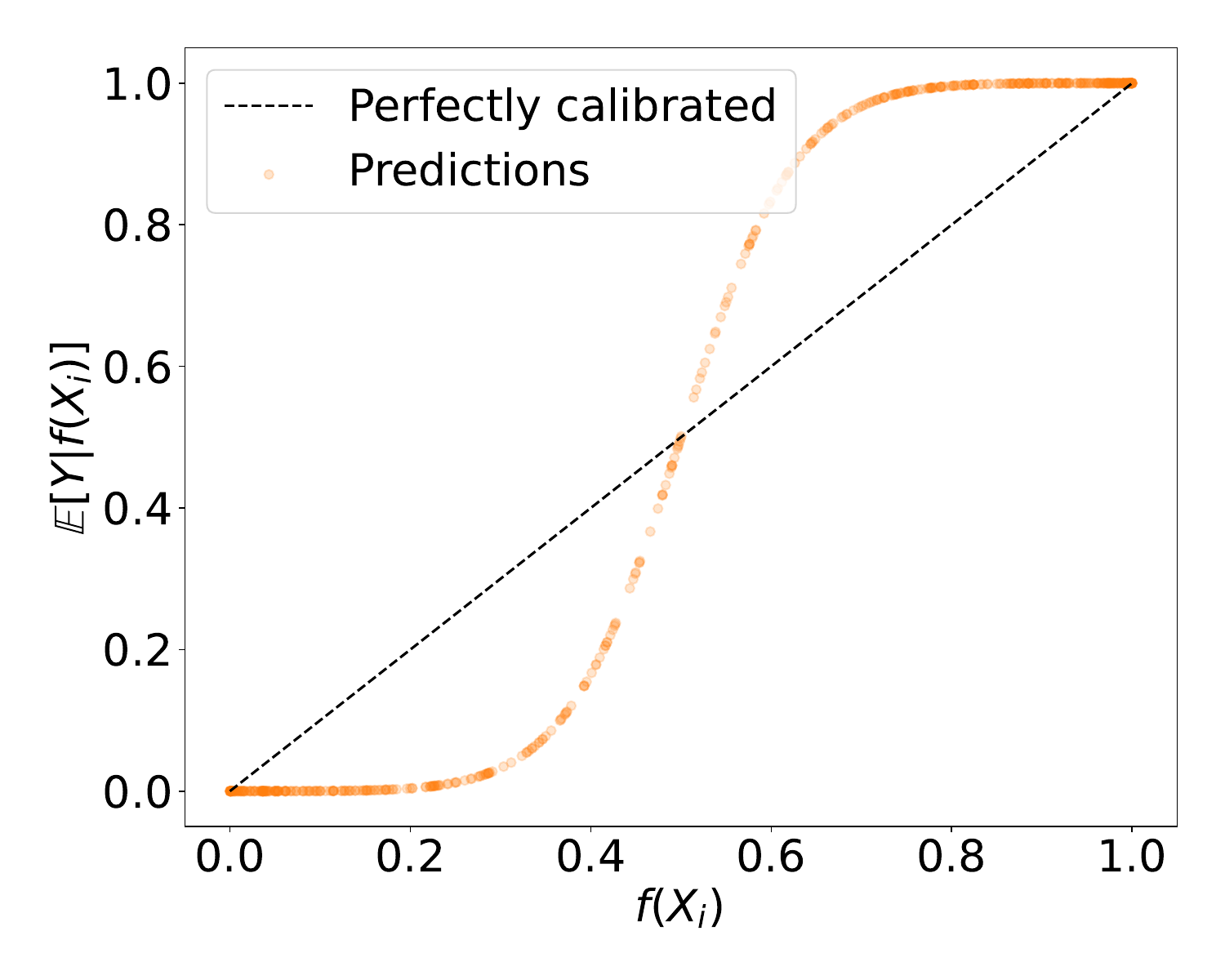}}
    \subfigure{\includegraphics[width=0.3\columnwidth]{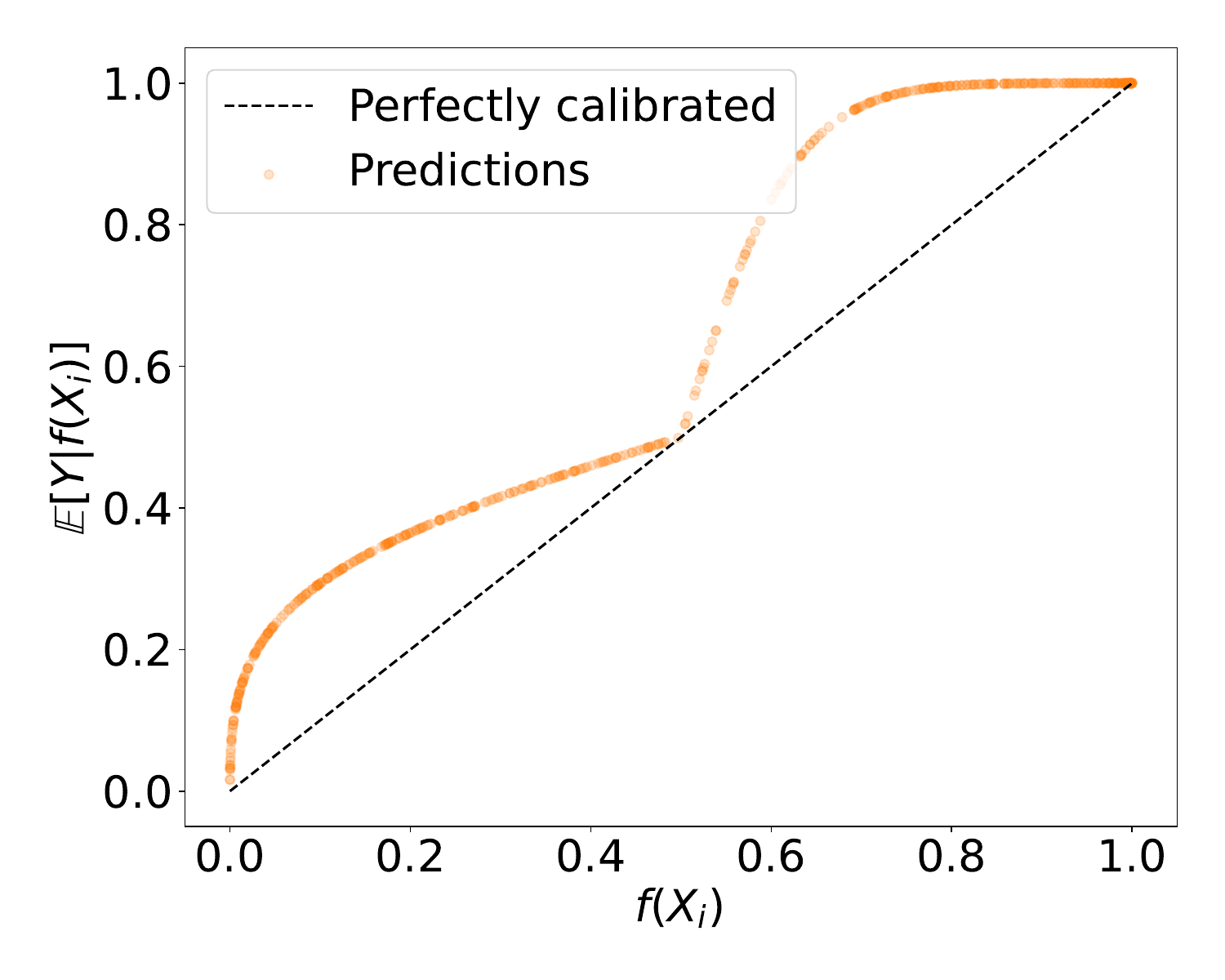}}
    \vspace{-3mm}
    \caption{Different simulated mis-calibration scenarios. Predictions are either over-confident (left), under-confident (middle) or a mix of both (right).}
    \label{fig:app:prediction:over:under}
\end{figure}

\begin{table}[h!]
\centering
\begin{tabular}{lccc}
& Over-confident& Under-confident& Both\\
\midrule
$\widehat{\mathrm{CE}}_{|Over(\cdot)|}$& $0.128$ & $0.000$ & $0.062$ \\
$\widehat{\mathrm{CE}}_{|Under(\cdot)|}$& $0.000$ & $0.126$ & $0.063$ \\
$\widehat{\mathrm{CE}}_{|\cdot|}$& $0.127$ & $0.125$ & $0.125$ \\
$\mathrm{CE}_{|\cdot|}$& $0.129$ & $0.128$ & $0.128$ \\
\bottomrule
\end{tabular}
\caption{\textbf{CE recovered for the different over- and under-confident setups in \Cref{fig:app:prediction:over:under}.} Values are averaged over 10 independent runs, negative values are then clipped to $0$, and the calibration function is estimated with our Catboost+SMS model.}
\label{table:over:under}
\end{table}

\section{SYNTHETIC EXPERIMENTS}
\label{app:synthetic}
We start by providing more details on the setup of our synthetic experiments. In both experiments, we first generate synthetic samples $U$ on the simplex, to model the predictions $f(X)$.
For binary classification, $U \sim \mathrm{Beta}(0.5, 0.5)$.
For multiclass classification, $U\sim \mathrm{Dirichlet}(0.5 \times \mathbf{1}_d)$, where $\mathbf{1}_d = (1, 1, \dots, 1)$ is a vector of~$d$ ones.
This procedure tends to produce predictions near the boundaries of the simplex. 

We then generate labels $Y \sim \mathcal{B}(g^\star(U))$, where $\mathcal{B}(p)$ is the Bernoulli distribution with parameter $p\in[0,1]$, and~$g^\star$ is the true calibration function, that we know in closed form in these synthetic experiments.
In the multiclass case, this is generalized to $Y \sim \mathrm{Multinomial}(g^\star(U))$, where $U$ is the initial prediction of the model and $g^\star(U)$ is the calibrated probability vector.
The true calibration error is therefore $\mathrm{CE}_{\|\cdot\|_p}=\E[\|U- g^\star(U)\|_p]$. Since we have access to the calibration function $g^\star$, we can estimate this quantity, and we do so using $300,000$ samples.
All experiments are repeated 10 times. The plots show the mean $\pm$ the standard error.

\paragraph{Binary.}
In the binary case, $u \in [0,1]$, we test three different settings (results are presented in \Cref{sec:experiments}):
\begin{itemize}
    \item The perfectly calibrated case: $g^\star(u)=u$.
    \item The over-confident case: $g^\star(u)= \mathrm{sigmoid}(0.4\cdot \mathrm{logit}(u) + 0.3)$, where $\mathrm{logit}(u)=\log(u/(1-u))$.
    \item The shifted case: $g^\star(u)=\min(1, u+\varepsilon)$ with $\varepsilon=0.02$.
\end{itemize}

\paragraph{Multivariate.}
\label{app:synthetic:multi}
In the multiclass case, $u \in \Delta_d$, and we experiment with $d=3$ and $d=10$ classes. We test three different settings:
\begin{itemize}
    \item The perfectly calibrated case: $g^\star(u)=u$.
    \item The overconfident case: $g^\star(u)= \mathrm{sigmoid}(0.3\cdot \log(u))$.
    \item The under-confident case: $g^\star(u)= \mathrm{sigmoid}(2\cdot \log(u))$. 
\end{itemize}

For the binning strategy, we group the predictions $U$ into $30$ bins by doing clustering on the samples $U_i$ and compute the weighted average of the difference between confidence and accuracy across these bins. This is defined as $\sum_{i=1}^{30}\frac{\card(B_i)}{n}\|\mathrm{acc}_i-\mathrm{conf}_i\|_2$, where $B_i$ is the $i$-th bin, $\mathrm{acc}_i=\frac{1}{\card(B_i)}\sum_{j\in B_i}Y_j$, and $\mathrm{conf}_i=\frac{1}{\card(B_i)}\sum_{j\in B_i}f(X_j)$. 

The results for $\mathrm{CE}_{\|\cdot\|_2}$ are shown in \Cref{fig:app:L2:synthetic:multi}. Again, our approach is the only one that consistently estimates the calibration error when the classifier is indeed calibrated. It always provides a lower bound, which converges faster to the true value in over-confident and under-confident settings.

\begin{figure}[h!]
    \center
    \subfigure{\includegraphics[width=0.48\columnwidth]{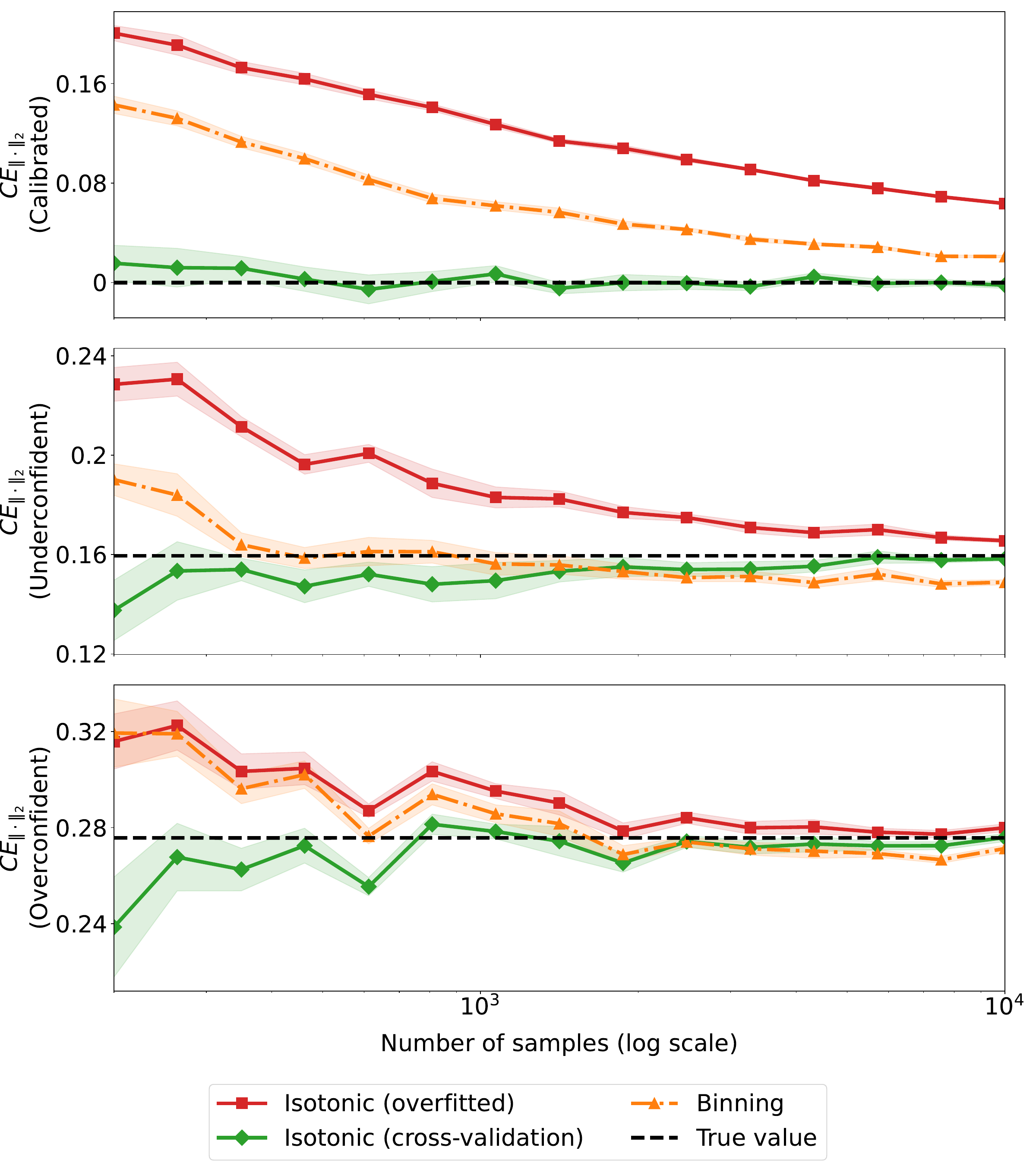}} ~
    \subfigure{\includegraphics[width=0.48\columnwidth]{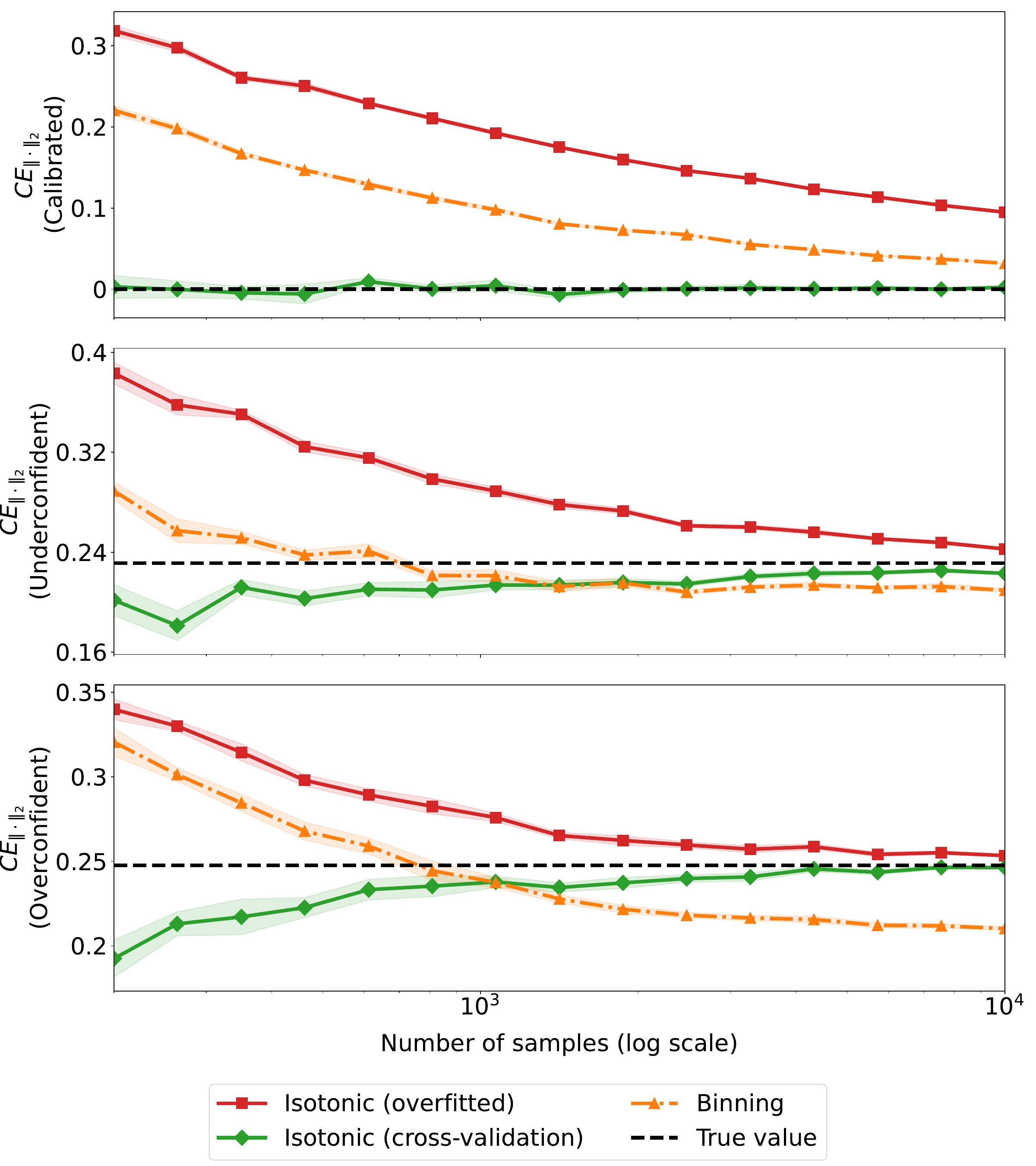}} 
    \vspace{-3mm}
    \caption{Estimation of $\mathrm{CE}_{\|\cdot\|_2}$ with binning, isotonic regression with over-fitting, and cross-validated isotonic regression on synthetic multiclass datasets with $3$ classes (left) and $10$ classes (right)}
    \label{fig:app:L2:synthetic:multi}
\end{figure}

\section{DETAILS ON CLASSIFIERS USED}
\label{app:strong:classifiers}
Here, we provide more details on the classifiers used in \Cref{sec:experiments}. Most classifiers resemble the setup of \cite{braun2025conditional}, and this section therefore closely follows their presentation.

\paragraph{Tabular foundation models.}
We evaluate RealTabPFN-2.5 \citep{grinsztajn2025tabpfn} and TabICLv2 \citep{qu2026tabiclv2}.
These models can predict $\hat{g}(x^{\mathrm{test}}_1), \hdots, \hat{g}(x^{\mathrm{test}}_n)$ with a single forward pass through a neural network that takes both the test input and the entire training set into account.
They have been found to perform very well already without hyperparameter tuning \citep{erickson2025tabarena}.

\paragraph{Gradient-boosted decision trees.} We choose two representatives: CatBoost \citep{prokhorenkova2018catboost} is known for its strong default performance.
We use hyperparameters from AutoGluon \citep{erickson2020autogluon} and TabArena \citep{erickson2025tabarena}, with 200 early stopping rounds instead of AutoGluon's custom early stopping logic.
For a faster alternative, we use LightGBM \citep{ke2017lightgbm} with cheaper hyperparameters, adapted from the tuned defaults of \cite{holzmuller2024better}, reducing the number of early stopping rounds to 100 and using cross-entropy loss for early stopping (as for CatBoost).
Both CatBoost and LightGBM are fitted in parallel with eight inner cross-validation folds for each outer cross-validation fold, following TabArena.
The inner cross-validation folds are used for early stopping, and the final validation predictions are concatenated and used to fit a quadratic scaling (binary) or structured matrix scaling (multiclass) post-hoc calibration function \citep{berta2025structured}.
Test set predictions are made based by applying the post-hoc calibration function to the average of the eight models' predictions.

\paragraph{Warm started gradient-boosted decision trees.} 
We use the models described above, with additional warm-starting of the gradient-boosted decision trees using the non-calibrated logits.
This reduces the complexity of the learning problem, so we reduce the tree count to 10.
All other hyperparameters, including the inner cross-validation and quadratic scaling, remain consistent with previous experiments.

\paragraph{Nadaraya-Watson.} Nadaraya-Watson \citep{nadaraya1964estimating} estimation predicts the label by computing a locally weighted average of the training labels based on their distance in the feature space. The predictor relies on a radial basis function (RBF) kernel to determine these weights, and the kernel bandwidth parameter is fixed to $0.1$.
At inference time, the model evaluates the similarity between each new sample and all training instances, predicting a weighted mean where closer training points have a higher influence on the final output.

\paragraph{Temperature scaling.} Temperature scaling \citep{guo2017calibration} is a parametric calibration method that learns a single scalar parameter to scale the logits before the softmax or sigmoid function is applied.
We use the implementation provided by \cite{berta2025rethinking}.
At inference time, the logits of each new sample are scaled by the learned parameter, which adjusts the confidence of the predicted probabilities without altering the original class predictions.

\paragraph{Isotonic regression.} Isotonic regression is a non-parametric calibration technique that learns a piecewise constant function, monotonically increasing in $f(X)$ to map predictions to true empirical probabilities.
We also use the implementation from \cite{berta2025rethinking}.
At inference time, the method takes the confidence score of a new sample and maps it through the learned step function to predict the calibrated probability. In the multiclass setting, we apply it in a one-versus-rest fashion.

\paragraph{PartitionWise.} Partition-wise estimation first groups samples in the feature space and then predicts by using the average (one hot) label within each group. The predictor relies on KMeans to form the partitions, and the number of clusters is set to $15$ for binary classification and $30$ for multi-class.
At inference time, each new sample is assigned to its nearest cluster and the model predicts the mean stored for that cluster. We use this classifier due to its similarity to binning. 

\paragraph{Trade-offs.} We chose to only fit boosted trees with inner cross-validation, both because they need validation sets for early stopping and because they are still reasonably fast with parallelization. The use of cross-validation also allows for the application of post-hoc calibration. Other methods might also benefit from post-hoc calibration, especially for non-L1 metrics, at the cost of higher runtime due to cross-validation.

\paragraph{Discussion and other options.} We omit tabular neural networks trained from scratch from our comparison as they are relatively slow, especially on CPU, and their performance is suboptimal without tuning \citep{erickson2025tabarena}.
If runtime is less of a concern, for ideal sample-efficiency, automated machine learning methods such as AutoGluon \citep{erickson2020autogluon} can be employed, it combines multiple models and hyperparameter settings, ensembling, and post-hoc calibration.
However, when these methods are used with time limits, the results may not be reproducible.

\paragraph{Hardware.} We ran tabular foundation models on GPUs (NVIDIA V100) and the other models on CPUs (Cascade Lake Intel Xeon 5217 with 8 cores).

\end{document}